\theoremstyle{definition}
\newtheorem{theorem}{Theorem}  [section] 
\newtheorem{proposition}[theorem] {Proposition}
\newtheorem{problem}[theorem] {Problem}
\numberwithin{equation}{section}
\DeclareMathOperator*{\argmax}{argmax}
\begin{document}

\title{Maximizing Invariant Data Perturbation with Stochastic Optimization}

\author{Kouichi Ikeno}
\author{Satoshi Hara}
\affil{Osaka University, Japan\\ \url{{k1keno, satohara}@ar.sanken.osaka-u.ac.jp}}

\date{}

\maketitle

\begin{abstract}
Feature attribution methods, or saliency maps, are one of the most popular approaches for explaining the decisions of complex machine learning models such as deep neural networks.
In this study, we propose a stochastic optimization approach for the perturbation-based feature attribution method.
While the original optimization problem of the perturbation-based feature attribution is difficult to solve because of the complex constraints, we propose to reformulate the problem as the maximization of a differentiable function, which can be solved using gradient-based algorithms.
In particular, stochastic optimization is well-suited for the proposed reformulation, and we can solve the problem using popular algorithms such as SGD, RMSProp, and Adam.
The experiment on the image classification with VGG16 shows that the proposed method could identify relevant parts of the images effectively.
\end{abstract}

%%%%%%%%%%%%%%%%%%%%%%%%%%%%%%%%%%%%%%%%%%%%%
\section{Introduction}
\label{sec:intro}

Feature attribution methods~\cite{simonyan2013deep,springenberg2014striving,bach2015pixel,sundararajan2017axiomatic,smilkov2017smoothgrad,shrikumar2017learning,hara2018maximally}, or saliency maps, are one of the most popular approaches for explaining the decisions of complex machine learning models such as deep neural networks.
In feature attribution, for each of given instance, they score how much each feature is relevant to the model's decision: they score the features relevant to the model's decision with large values and irrelevant features with small values.
For example, in image recognition, they highlight which pixels the models have focused on by scoring the relevance of each pixel~\cite{simonyan2013deep,springenberg2014striving,bach2015pixel,sundararajan2017axiomatic,smilkov2017smoothgrad}, and in text classification, they detect a set of words or sentences relevant to the model's decision by scoring each words or sentences~\cite{ding2017visualizing,chen2018learning}.
With feature attribution methods, one can obtain relevant features, such as pixels or words, as explanations why the models made certain decisions, which also helps the users to inspect whether the models are reliable or not.
Major approaches for feature attribution are based on modified gradients~\cite{simonyan2013deep,springenberg2014striving,bach2015pixel,sundararajan2017axiomatic,smilkov2017smoothgrad,shrikumar2017learning} and feature maskings~\cite{fong2017interpretable}.

Recently, Hara et al.~\cite{hara2018maximally} proposed to separate the definition of the feature attribution scores and the algorithms to compute them.
In most of the previous studies, scores are defined by their computation algorithms themselves except for some axiomatic approaches~\cite{sundararajan2017axiomatic,lundberg2017unified}.
The separation of the definitions and the algorithms allows us to consider each aspect independently.
For example, if the definition is not appropriate, improving algorithms does not help and we need to reconsider the definition in such a situation.

As one possible definition of the feature attribution score, Hara et al.~\cite{hara2018maximally} proposed to measure the irrelevance of each feature to the model's decision by the maximum size of data perturbations that does not change the decision.
Specifically, they defined the score as a solution to the optimization problem that maximizes the size of data perturbation under the constraint that the perturbed data to remain inside the model's decision boundary.
They then proposed an algorithm that solves the optimization problem approximately: they approximated the constraint with linear functions and reformulated the problem as linear programming.
The linear programming formulation is found to be useful with several flexible extensions such as relaxing constraints and sharing scores among features.
The drawback of the linear approximation, however, is that we cannot strictly enforce the perturbed data to stay inside the model's decision boundary.
Therefore, the solution to the linear programming can violate the definition.

This study is positioned as the improvements of the algorithm for the score defined by Hara et al.~\cite{hara2018maximally}.
Specifically, to relieve the risk of the definition violation in the method of Hara et al.~\cite{hara2018maximally}, we propose an algorithm to solve the problem \emph{without} linear approximation.
In the proposed approach, we reformulate the problem so that it can be solved by using gradient-based algorithms.
Specifically, we rewrite the constraint as a differential penalty function in the objective function to be maximized.
With this reformulation, the problem is expressed as the maximization of a differentiable function, which can be solved using gradient-based algorithms.
In particular, stochastic optimization is well-suited for the proposed reformulation, and we can solve the problem using popular algorithms such as SGD, RMSProp~\cite{tieleman2012lecture}, and Adam~\cite{kingma2014adam}.

\paragraph{Settings}
In this paper, we consider the classification model $f$ for $k$ categories that returns an output $y \in \mathbb{R}^k$ for a given input $x \in \mathbb{R}^d$, i.e., $y = f(x)$.
The classification result is determined as $c = \argmax_j y_j$ where $y_j = f_j(x)$ is the $j$-th element of the output.
We assume that the model $f$ is differentiable with respect to the input $x$: the target models therefore include linear models, kernel models with differentiable kernels, and deep neural networks.
We assume that the model $f$ and the target input $x$ to be explained are given and fixed.

%%%%%%%%%%%%%%%%%%%%%%%%%%%%%%%%%%%%%%%%%%%%%
\section{Problem Definition: Maximizing Invariant Data Perturbation}
\label{sec:problem}

In this section, we briefly review the definition of the problem introduced by Hara et al.~\cite{hara2018maximally}.
As an explanation of the input $x$, we seek the maximally invariant data perturbation that does not change the model's decision.

We start from introducing \emph{invariant perturbation set}.
We say that a set $R \subseteq \mathbb{R}^d$ is an invariant perturbation set if the model's decision is invariant for all $r \in R$, i.e., $c = \argmax_j f_j(x+r)$.

In the study, for ease of computation, we restrict our attention to a box-shaped invariant perturbation set $R(w) = [-w_1, w_1] \times [-w_2, w_2] \times \cdots \times [-w_d, w_d]$ for a parameter $w \in \mathbb{R}^d_+$\footnote{In Hara et al.~\cite{hara2018maximally}, the lower and upper boundaries of the box are parametrized by different parameters $u$ and $v$. Here, we use a common parameter $w$ for ease of computation.}. 
From this definition of $R(w)$, the size of the invariant perturbation of each feature $x_i$ is proportional to $w_i$.
The idea here is that, if the invariant perturbation $w_i$ is small, the change of the feature $x_i$ can highly impacts the model's decision, which indicates that the feature $x_i$ is relevant to the decision.
On the other hand, if $w_{i'}$ is large, the feature $x_{i'}$ only has a minor impact to the model's decision, and thus it is less relevant.

To obtain an invariant perturbation set $R(w)$ appropriate for feature attribution, Hara et al.~\cite{hara2018maximally} proposed to maximize the side lengths of the $R(w)$ so that $w_i$ to be sufficiently large for irrelevant features.
\begin{problem}[Maximal Invariant Perturbation]
	\label{prob:problem}
	Find the invariant perturbation set $R(\hat{w})$, where
	\begin{align}
		\hat{w} = \argmax_{w \in \mathbb{R}_+^d, \|w\|_\infty \le C} \sum_{i=1}^d w_i, \; {\rm s.t.} \; c = \argmax_j f_j(x + r), \forall r \in R(w) .
		\label{eq:problem}
	\end{align}
\end{problem}
Here, $C > 0$ is the upper bound of the perturbation.
The value of $C$ can be usually determined from the nature of the data.
For example, if the data is the image, the value of each pixel is usually restricted in $[0, 1]$.
In this case, the upper bound $C = 1$ is the natural choice.

%%%%%%%%%%%%%%%%%%%%%%%%%%%%%%%%%%%%%%%%%%%%%
\section{Proposed Method}
\label{sec:method}

We now turn to our proposed method to solve the problem (\ref{eq:problem}).
The difficulty on solving the problem (\ref{eq:problem}) is that it requires the constraint $c = \argmax_j f_j(x + r)$ to hold for all possible $r \in R(w)$.
In the proposed method, we rewrite the constraint by using the expectation over $r \in R(w)$.
In this way, we can reformulate the problem as the maximization of a differentiable function, which can be solved using gradient-based algorithms.

\subsection{Problem Reformulation}

We reformulate the problem (\ref{eq:problem}) into the penalty-based expression, so that the gradient-based optimization algorithms to be applicable.
Recall that the constraint $c = \argmax_j f_j(x + r)$ is equivalent to a set of constraints $f_c(x+r) \ge f_j(x + r)$ over $\forall j \neq c$.
Here, we rewrite the constraint $f_c(x+r) \ge f_j(x + r)$ into the following equivalent expression using expectation.
\begin{proposition}
	The constraint $f_c(x+r) \ge f_j(x + r), \forall r \in R(w)$ is equivalent to
	\begin{align}
		\mathbb{E}_t[\max\{0, f_j(x+t \odot w) - f_c(x + t \odot w)\}] \le 0, 
		\label{eq:exp_constraint}
	\end{align}
	where $\mathbb{E}_t$ denotes the expectation over uniformly random $t \in [-1, 1]^d$, and $\odot$ denotes an element-wise product.
\end{proposition}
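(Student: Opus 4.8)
The plan is to first observe that the box $R(w)$ is exactly the image of the cube $[-1,1]^d$ under the map $t \mapsto t \odot w$: since $w \in \mathbb{R}^d_+$, we have $(t \odot w)_i = t_i w_i \in [-w_i, w_i]$ for every $t \in [-1,1]^d$, and conversely any $r \in R(w)$ is hit by taking $t_i = r_i/w_i$ when $w_i > 0$ and $t_i = 0$ (equivalently anything in $[-1,1]$) when $w_i = 0$, in which case necessarily $r_i = 0$. So ``$r$ ranges over $R(w)$'' and ``$r = t \odot w$ with $t$ ranging over $[-1,1]^d$'' describe the same set, and it suffices to prove the equivalence with $r$ replaced by $t \odot w$. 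Writing $g(t) := \max\{0,\, f_j(x + t \odot w) - f_c(x + t \odot w)\}$, the statement becomes: ``$g(t) = 0$ for all $t \in [-1,1]^d$'' is equivalent to ``$\mathbb{E}_t[g(t)] \le 0$'', the expectation being taken against the uniform (Lebesgue) probability density on $[-1,1]^d$.

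For the forward direction, assume $f_c(x+r) \ge f_j(x+r)$ for all $r \in R(w)$. Then for every $t \in [-1,1]^d$ we have $f_j(x + t\odot w) - f_c(x + t\odot w) \le 0$, so $g(t) = 0$ identically, and hence $\mathbb{E}_t[g(t)] = 0 \le 0$.

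For the converse, note that $g \ge 0$ pointwise, so $\mathbb{E}_t[g(t)] \ge 0$; together with the hypothesis $\mathbb{E}_t[g(t)] \le 0$ this forces $\mathbb{E}_t[g(t)] = 0$. The one step that needs a little care is passing from ``integral equals zero'' to ``function is identically zero''. Here we use that $f$ is differentiable by the standing assumption in the Settings paragraph, hence continuous, and that $u \mapsto \max\{0,u\}$ is continuous; therefore $g$ is a continuous nonnegative function on the compact cube. A continuous nonnegative function whose integral against a strictly positive density vanishes must be identically zero — otherwise, if $g(t^\star) > 0$ for some $t^\star$, continuity would give a neighborhood on which $g$ stays bounded below by $g(t^\star)/2 > 0$, contributing a strictly positive amount to the integral, a contradiction. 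Hence $g \equiv 0$ on $[-1,1]^d$, i.e.\ $f_j(x + t\odot w) \le f_c(x + t\odot w)$ for all $t$, which by the first paragraph is precisely $f_c(x+r) \ge f_j(x+r)$ for all $r \in R(w)$.

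The only genuine obstacle is the integral-to-pointwise implication, and even that is routine once continuity of the integrand is in hand (provided by the differentiability assumption on $f$); the remaining work — the surjectivity of $t \mapsto t\odot w$ onto $R(w)$, including the degenerate coordinates where $w_i = 0$, and the sign bookkeeping around the $\max\{0,\cdot\}$ — is entirely elementary.
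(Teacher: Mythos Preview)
Your proof is correct and follows the same core idea as the paper's: the integrand $g(t)=\max\{0,f_j(x+t\odot w)-f_c(x+t\odot w)\}$ is nonnegative, so the expectation being $\le 0$ forces it to equal $0$, which in turn forces $g$ to vanish. The paper's argument is terser and stops at the conclusion that the violating set $\{r\in R(w):f_c(x+r)<f_j(x+r)\}$ has measure zero, without explicitly upgrading ``measure zero'' to ``empty''; you close that gap by invoking the continuity of $f$ (from the differentiability assumption in the Settings paragraph), and you also treat the degenerate coordinates $w_i=0$ where $t\mapsto t\odot w$ is not a bijection onto $R(w)$. So your argument is essentially the paper's, carried out with more care.
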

\begin{proof}
	We first note that $r := t \odot w$ is a uniform random variable in $R(w)$.
	Here, recall that $\max\{0, f_j(x+r) - f_c(x + r)\}$ is non-negative.
	Therefore, the inequality $\max\{0, f_j(x+r) - f_c(x + r)\} \le 0$ holds if and only if $f_c(x+r) \ge f_j(x + r)$.
	Hence, for the inequality $\mathbb{E}_t[\max\{0, f_j(x+r) - f_c(x + r)\}] \le 0$ to hold, the measure of the points that violates the inequality, i.e., $\{r \in R(w) : f_c(x+r) < f_j(x + r)\}$, must be zero.
\end{proof}

We now put the constraint (\ref{eq:exp_constraint}) as a penalty term in the objective function of (\ref{eq:problem}):
\begin{align}
	\begin{split}
	\hat{w} =& \argmax_{w \in \mathbb{R}_+^d} \sum_{i=1}^d w_i - \lambda \sum_{j \neq c} \mathbb{E}_t[h_{jc}(x + t \odot w)],
	\label{eq:problem2}
	\end{split}
\end{align}
where $\lambda \ge 0$ is a penalty parameter, and $h_{jc}(x + t \odot w) := \max\{0, f_j(x+t \odot w) - f_c(x + t \odot w)\}$.
We note that, in the problem (\ref{eq:problem2}), the constraint (\ref{eq:exp_constraint}) holds for sufficiently large $\lambda$, and hence the problem (\ref{eq:problem2}) is equivalent to (\ref{eq:problem}).

\subsection{Stochastic Optimization}

Thanks to the formulation (\ref{eq:problem2}), it is differentiable with respect to the parameter $w$.
Hence, we can apply the gradient-based optimization methods.
Here, we note that stochastic optimization algorithms, such as SGD, RMSProp~\cite{tieleman2012lecture}, and Adam~\cite{kingma2014adam}, are particularity suited for the problem.
In each iteration of the algorithm, we randomly generate perturbations $\{t_m : t_m \sim \mathrm{Uniform}([-1, 1]^d)\}_{m=1}^M$, and approximate the objective function using the sample average:
\begin{align}
	\sum_{i=1}^d w_i - \lambda \sum_{j \neq c} \frac{1}{M} \sum_{m=1}^M h_{jc}(x + t_m \odot w) .
	\label{eq:approx}
\end{align}
The psuedo code of the stochastic optimization is shown in Algorithm~\ref{alg:sgd}.
The line 6 is added in the algorithm so that the parameter $w$ to stay between the lower bound zero and the upper bound $C$.

\begin{algorithm}[tb]
\begin{algorithmic}[1]
\State{Initialize $w$ with a zero vector}
\While{not converged}
\State{Sample $\{t_m : t_m \sim \mathrm{Uniform}([-1, 1]^d)\}_{m=1}^M$}
\State{Compute the gradient of (\ref{eq:approx})}
\State{Update $w$ using an update rule, such as SGD, RMSProp, and Adam}
\State{Clip $w$ as $w = \min\{\max\{0, w\}, C\}$}
\EndWhile
\State{Return $w$}
\end{algorithmic}
\caption{Stochastic Optimization}
\label{alg:sgd}
\end{algorithm}

%%%%%%%%%%%%%%%%%%%%%%%%%%%%%%%%%%%%%%%%%%%%%
\section{Experiment}
\label{sec:experiment}

\subsection{Experimental Setup}

In the experiment, we followed the setup used in Hara et al.~\cite{hara2018maximally}.
As the target model $f$ to be explained, we adopted the pre-trained VGG16~\cite{simonyan2014very} distributed at the Tensorflow repository.
As the target data $x$ to be explained, we used COCO-animal dataset\footnote{\url{cs231n.stanford.edu/coco-animals.zip}}.
Specifically, for the experiment, we used $200$ images in the validation set.

%In the experiment, we used the pre-trained VGG16~\cite{simonyan2014very} as the model $f$ distributed at the Tensorflow repository.
%The VGG16 model takes the image of size $(224, 224, 3)$ as the input, and returns the probability for $k=1,000$ categories.
%As the target data to be explained, we used COCO-animal dataset\footnote{\url{cs231n.stanford.edu/coco-animals.zip}} which contains images of eight animals.
%From the obtained dataset, we used $200$ images in the validation set for our experiment.

In the proposed method, we set $\lambda = 0.1d$ and $M=32$ where the data dimension is $d =  224 \times224 \times 3$~\footnote{A sample code is available at \url{https://github.com/sato9hara/PertMap}}.
As the optimization algorithm, we used Adam~\cite{kingma2014adam} with the step size set to $0.01$ and remaining parameters set to be default values.
To measure the relevance of the feature $x_i$ to the model's decision, we used $-\hat{w}_i$, the negative of the perturbation size, as the score.
Large score, or small $\hat{w}_i$, means that the change of the feature $x_i$ can highly impacts the model's decision, which indicates that the feature $x_i$ is relevant to the decision.
On the other hand, if the score is small, or $w_{i'}$ is large, the feature $x_{i'}$ only has a minor impact to the model's decision, and thus it is less relevant.

As the baseline methods, we used Gradient~\cite{simonyan2013deep}, GuidedBP~\cite{springenberg2014striving}, SmoothGrad~\cite{smilkov2017smoothgrad}, IntGrad~\cite{sundararajan2017axiomatic}, LRP~\cite{bach2015pixel}, DeepLIFT~\cite{shrikumar2017learning}, Occlusion, and the LP-based methods (LP and LP(Smooth))~\cite{hara2018maximally}.
Gradient, GuidedBP, SmoothGrad and IntGrad are implemented using \texttt{saliency}\footnote{\url{https://github.com/PAIR-code/saliency}} with default settings, and LRP, DeepLIFT, and Occlusion are implemented using \texttt{DeepExplain}\footnote{\url{https://github.com/marcoancona/DeepExplain}} where we set the mask size for Occlusion as $(12, 12, 3)$.
%For computing Gradient, GuidedBP, SmoothGrad and IntGrad, we used \texttt{saliency}\footnote{\url{https://github.com/PAIR-code/saliency}} with default settings, and for computing LRP, DeepLIFT, and Occlusion, we used \texttt{DeepExplain}\footnote{\url{https://github.com/marcoancona/DeepExplain}} where we set the mask size for Occlusion as $(12, 12, 3)$.
For the LP-based methods, we used the same setting as the ones used in Hara et al.~\cite{hara2018maximally} ($\delta = 0.1$ and $\lambda \approx 0.5$).
We note that, all the methods including the proposed method return the scores of size $(224, 224, 3)$.

\subsection{Result}

We evaluated the effectiveness of each method by making pixels of the images.
Specifically, we mask low score pixels with gray colors and observe whether the model's classification result is resistant to the flipping.
We expect that good feature attribution methods to identify relevant pixels with high scores.
Therefore, with good attribution methods, the model's classification result will kept unchanged even if we flip many low score pixels to gray, as relevant parts of the images remain unflipped.
On the other hand, if the attribution methods fail to identify relevant pixels with high scores, the classification result can change even with a small number of flips.

%We evaluated the efficacy of the proposed methods by partly masking images.
%Even if we mask some parts of the images, the model's classification result will kept unchanged as long as relevant parts of the images remain unmasked.
%To evaluate the performance of each feature attirbution method, we masked the images as follows.
We conducted the experiment as follows.
\begin{enumerate}
	\item Flip pixels with scores smaller than the $\tau$\% quantile to $0.5$ (i.e., we replace the selected pixels with gray pixels\footnote{We also conducted experiments by replacing with zero (black) or one (white). The results were similar, and thus omitted.}).
	\item Observe the ratio of the images with the classification result changes within the 200 images: the result changes in less images indicate that the feature attribution methods successfully identified relevant parts of the images.
\end{enumerate}
%First, we flip pixels with scores smaller than the $\tau$\% quantile to $0.5$ (i.e., we replace the selected pixels with gray pixels\footnote{We also conducted experiments by replacing with zero (black) or one (white). The results were similar, and thus omitted.}).
%We then observe the ratio of the images with the classification result changes within the 200 images: the result changes in less images indicate that the feature attribution methods successfully identified relevant parts of the images.

We varied the threshold quantile from $\tau = 0$ (no flip)  to $\tau = 100$ (all filp), and summarized the result in \figurename~\ref{fig:resvgg}.
%\figurename~\ref{fig:resvgg} shows the result when we varied the threshold quantile from $\tau = 0$ to $\tau = 100$.
It is clear that the proposed method is the most resistant to the masking: the changes on the classification results are kept almost zeros even if 50\% of the pixels are masked.
This indicates that the proposed method successfully identified relevant parts of the images.
Moreover, the proposed method consistently outperformed the LP-based methods.
We conjecture that the LP-based methods can violate the constraint in the problem because of approximation, which led to less accurate estimate of the solution to the problem (\ref{eq:problem}).
By contrast, the proposed method solves the problem \emph{without} approximation.
Hence, the derived solution is more accurate than the ones of the LP-based methods.

\begin{figure}[t]
	\centering
	\includegraphics[width=0.7\textwidth]{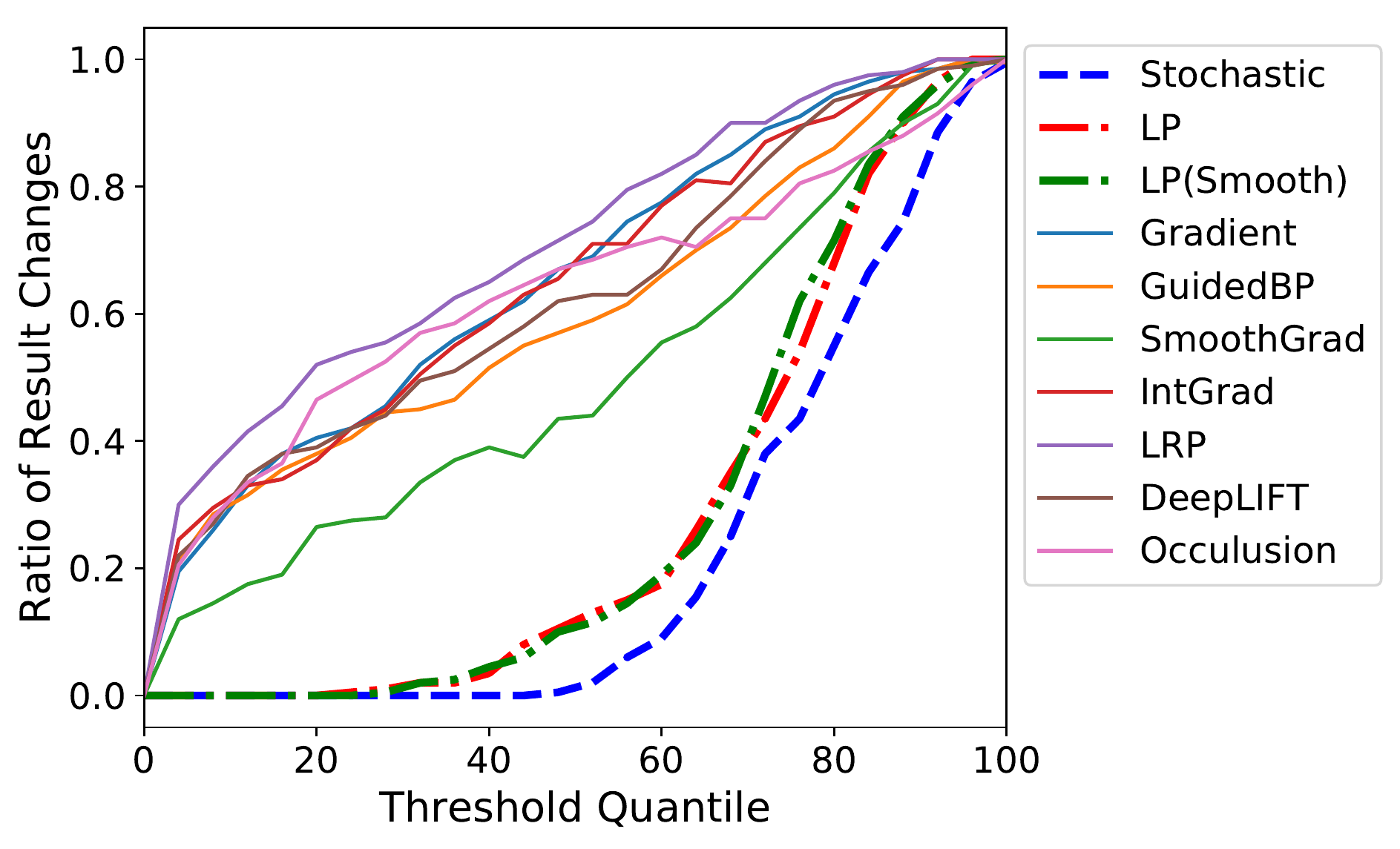}
	\caption{The ratio of the classification result changes when high score image patches are flipped to gray. \texttt{Stochastic} denotes the proposed method.}
	\label{fig:resvgg}
\end{figure}

\begin{figure}[!ht]
	\centering
	\includegraphics[width=0.99\textwidth]{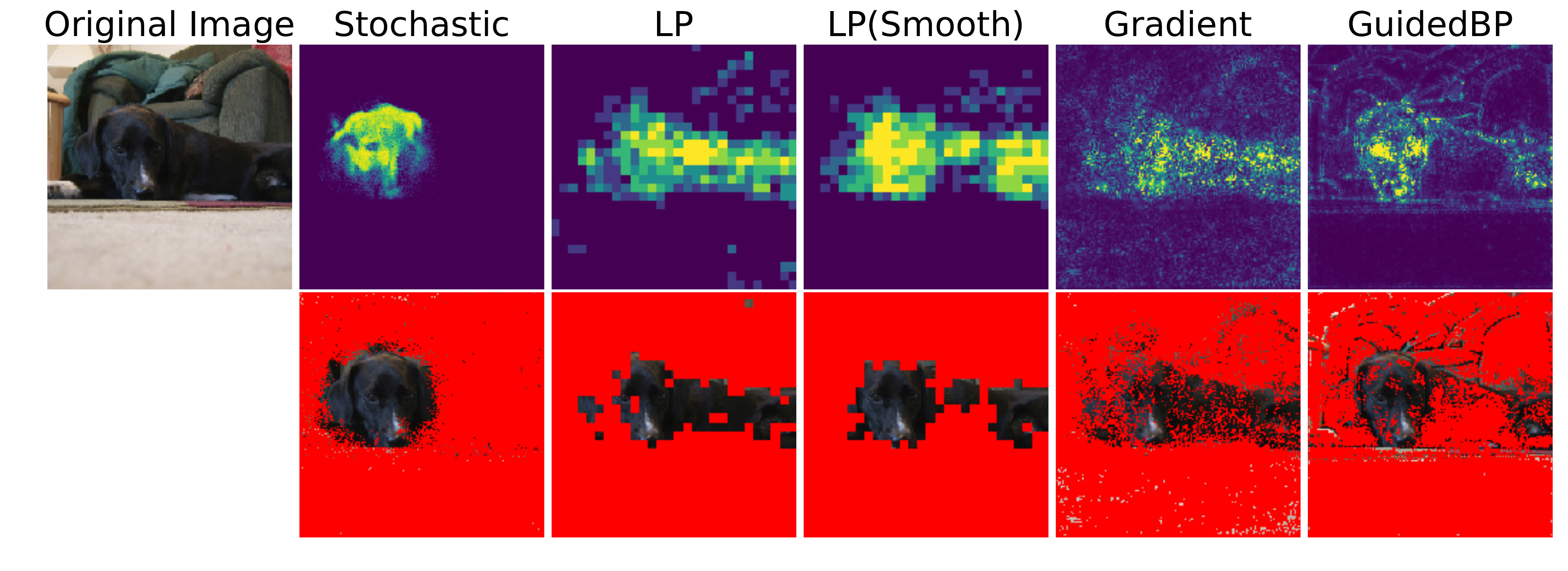}\\
	\hspace{0.15\textwidth}
	\includegraphics[width=0.83\textwidth]{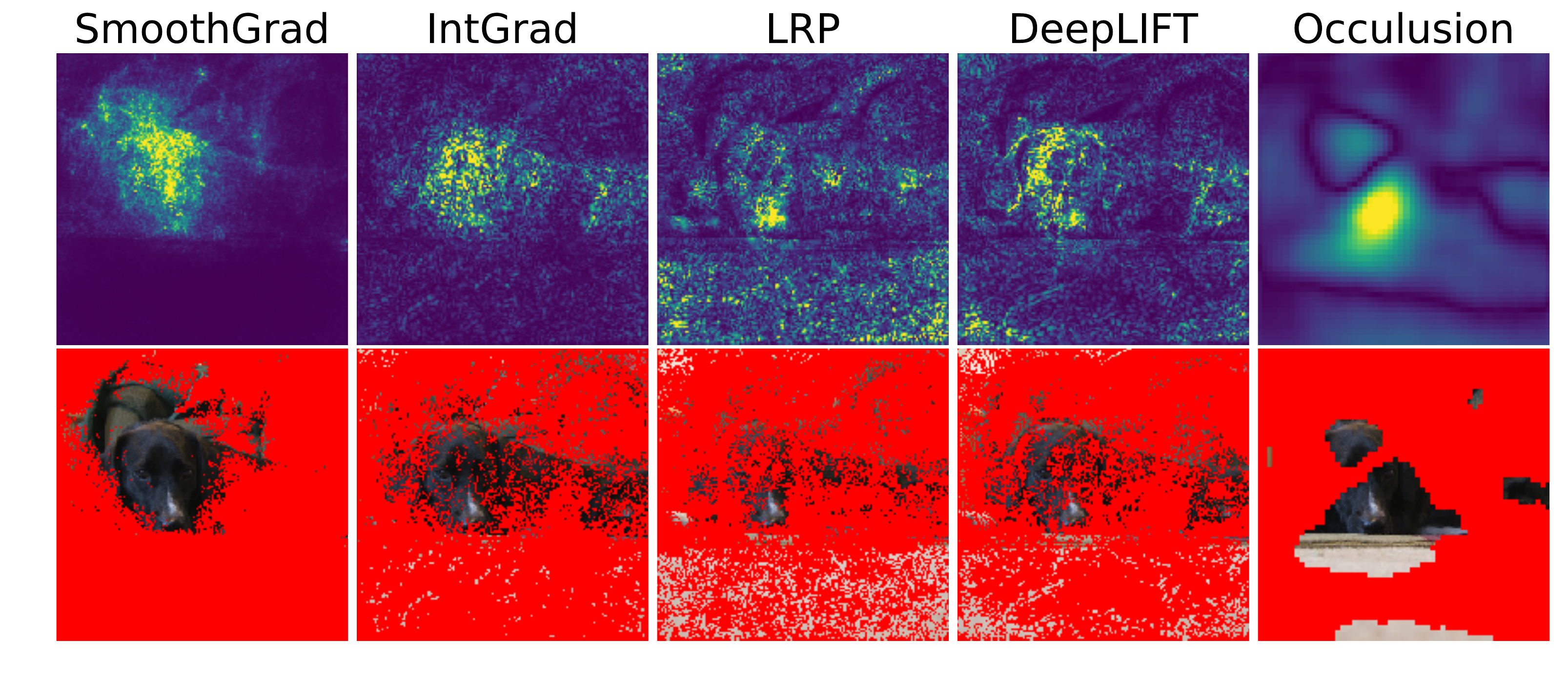}
	\caption{Example images (dog): the computed scores as heatmap (top); the filtered images with pixels with scores lower than the 80\% quantiles are filtered out with red (bottom). \texttt{Stochastic} denotes the proposed method.}
	\label{fig:example_dog}
\end{figure}

\begin{figure}[!ht]
	\centering
	\includegraphics[width=0.99\textwidth]{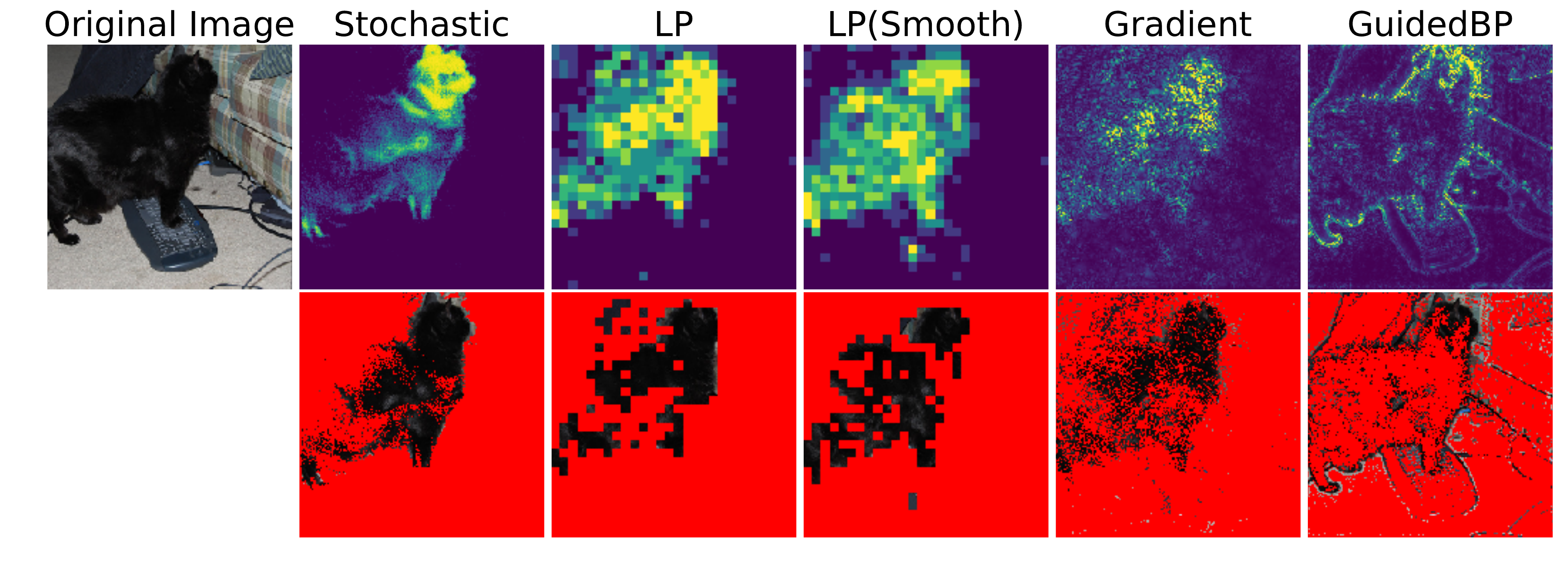}\\
	\hspace{0.15\textwidth}
	\includegraphics[width=0.83\textwidth]{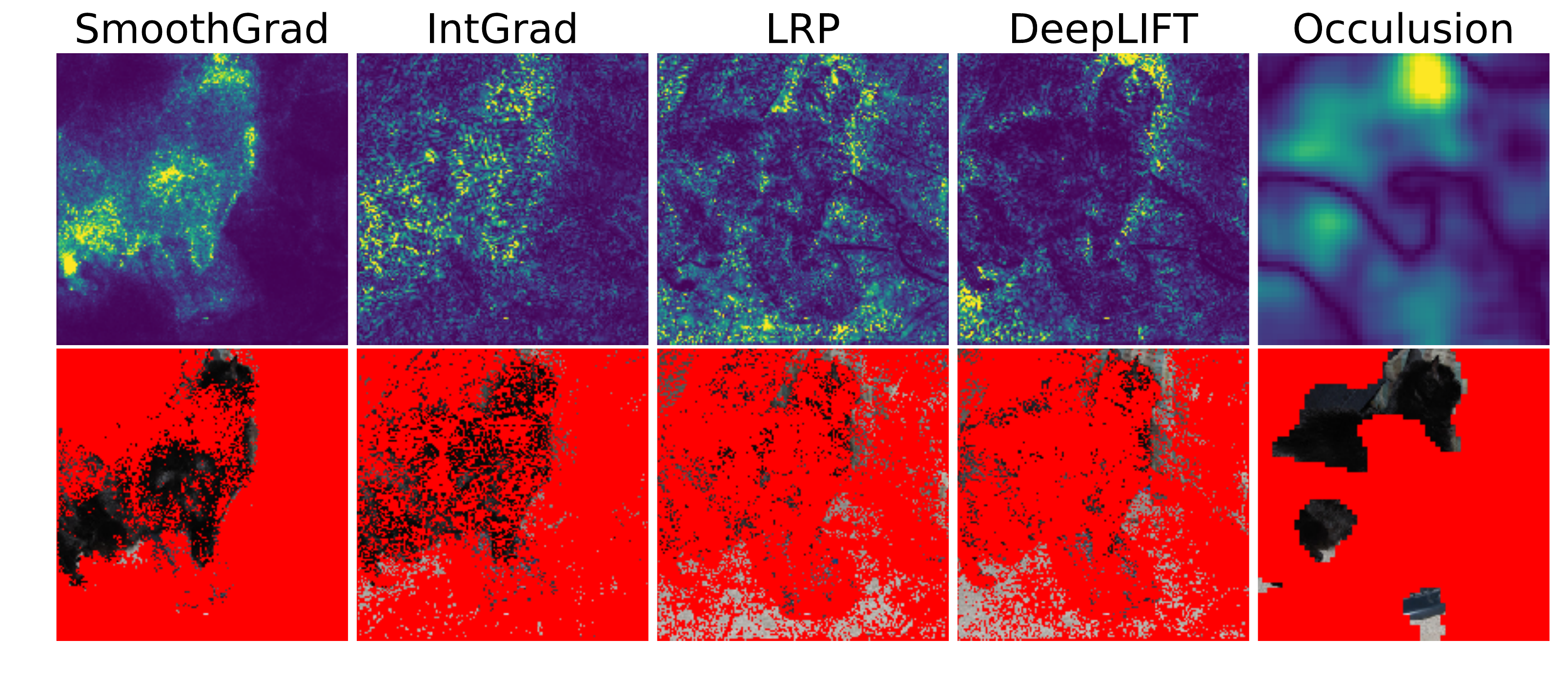}
	\caption{Example images (cat): the computed scores as heatmap (top); the filtered images with pixels with scores lower than the 80\% quantiles are filtered out with red (bottom). \texttt{Stochastic} denotes the proposed method.}
	\label{fig:example_cat}
\end{figure}

\begin{figure}[!ht]
	\centering
	\includegraphics[width=0.99\textwidth]{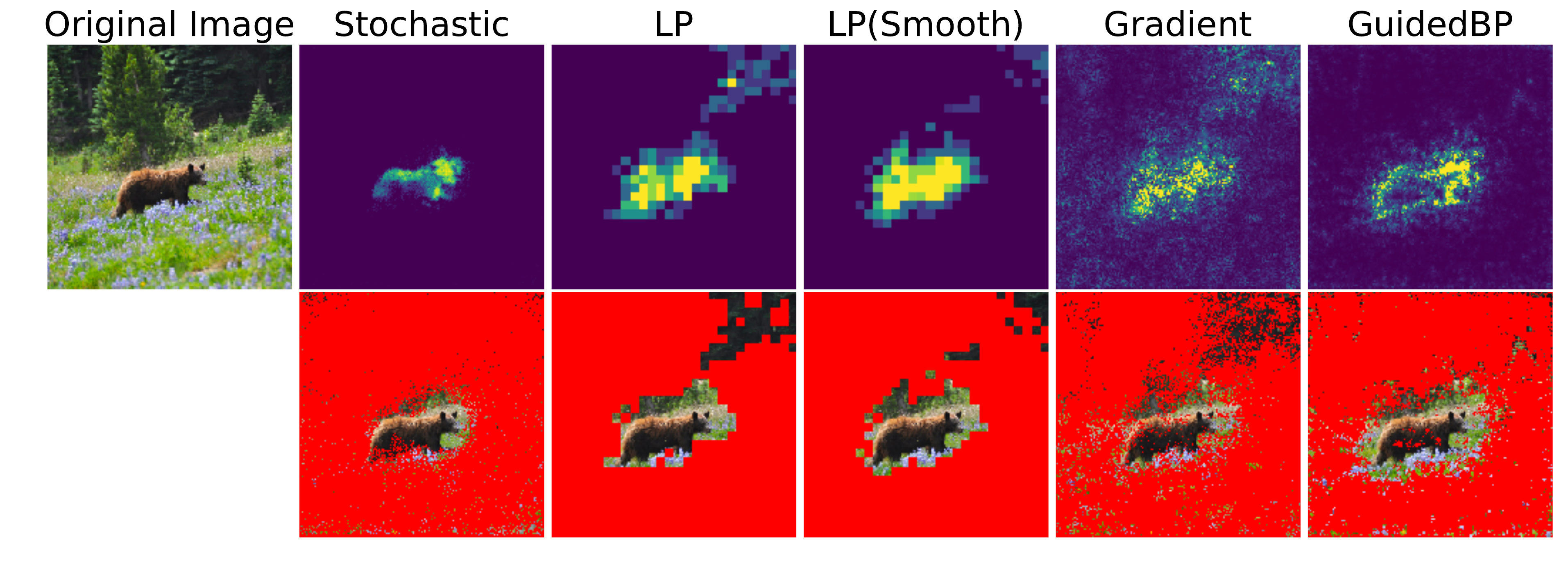}\\
	\hspace{0.15\textwidth}
	\includegraphics[width=0.83\textwidth]{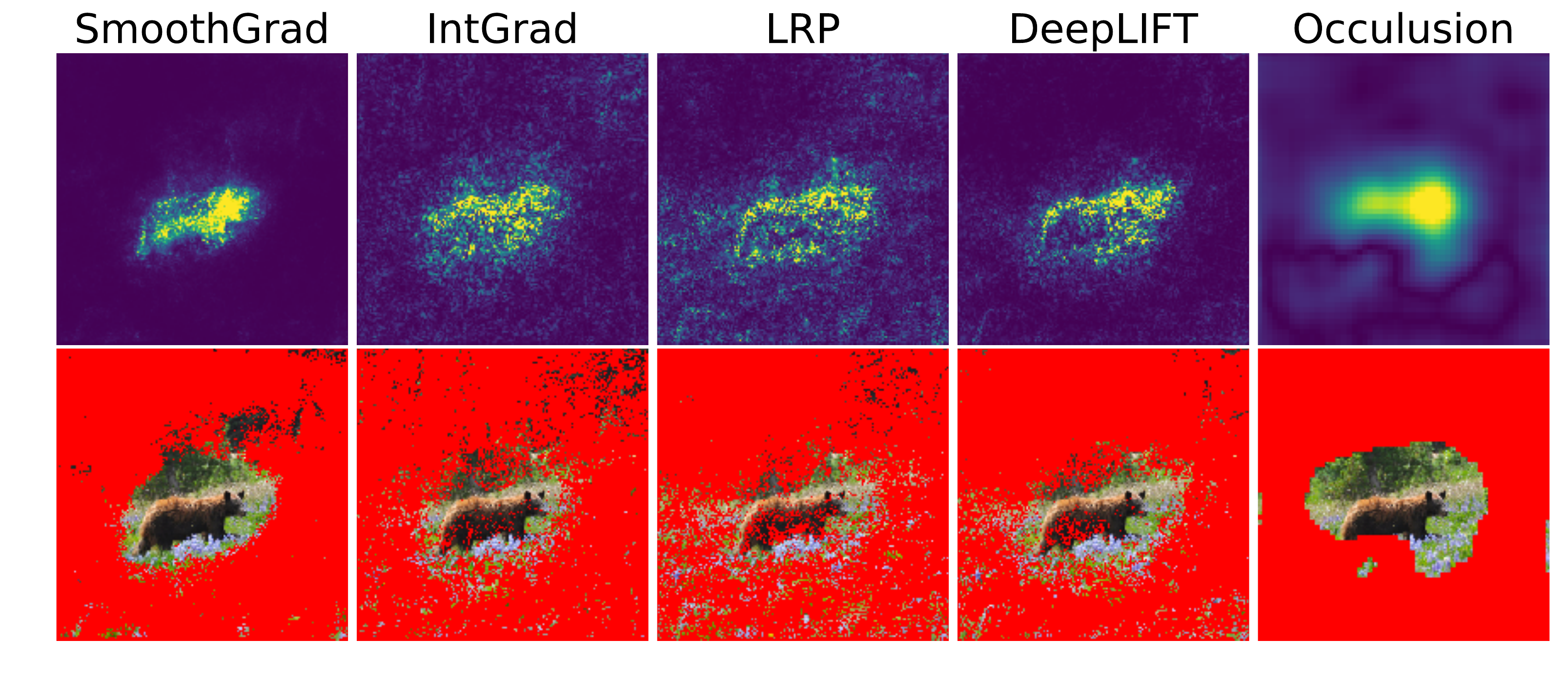}
	\caption{Example images (bear): the computed scores as heatmap (top); the filtered images with pixels with scores lower than the 80\% quantiles are filtered out with red (bottom). \texttt{Stochastic} denotes the proposed method.}
	\label{fig:example_bear}
\end{figure}

Figures~\ref{fig:example_dog}, \ref{fig:example_cat}, and \ref{fig:example_bear} show examples of the computed scores with each feature attirbution method.
The figures clearly show that the proposed method attained a very high S/N ratios compared to the existing method where the scores tend to be noisy.
We conjecture that this was because the proposed method could provide high quality solutions to the problem (\ref{eq:problem}) \emph{without} linear approximation.

%%%%%%%%%%%%%%%%%%%%%%%%%%%%%%%%%%%%%%%%%%%%%
\section{Conclusion}
\label{sec:conclusion}

In this study, as a novel feature attribution method, we proposed a stochastic optimization method for finding maximally invariant data perturbation.
In the proposed approach, we reformulated the problem as the maximization of a differentiable function, which can be solved using gradient-based algorithms.
In particular, stochastic optimization is well-suited for the proposed reformulation, and we can solve the problem using popular algorithms such as SGD, RMSProp, and Adam.
The experimental result on the image classification with VGG16 shows that the proposed method could identify relevant parts of the images effectively.

\section*{Acknowledgement}
This work was supported by JSPS KAKENHI Grant Number JP18K18106.

\bibliographystyle{unsrt}
\bibliography{main}

\begin{thebibliography}{10}

\bibitem{simonyan2013deep}
Karen Simonyan, Andrea Vedaldi, and Andrew Zisserman.
\newblock Deep inside convolutional networks: Visualising image classification
  models and saliency maps.
\newblock {\em arXiv:1312.6034}, 2013.

\bibitem{springenberg2014striving}
Jost~Tobias Springenberg, Alexey Dosovitskiy, Thomas Brox, and Martin
  Riedmiller.
\newblock Striving for simplicity: The all convolutional net.
\newblock {\em arXiv:1412.6806}, 2014.

\bibitem{bach2015pixel}
Sebastian Bach, Alexander Binder, Gr{\'e}goire Montavon, Frederick Klauschen,
  Klaus-Robert M{\"u}ller, and Wojciech Samek.
\newblock On pixel-wise explanations for non-linear classifier decisions by
  layer-wise relevance propagation.
\newblock {\em PloS ONE}, 10(7):e0130140, 2015.

\bibitem{sundararajan2017axiomatic}
Mukund Sundararajan, Ankur Taly, and Qiqi Yan.
\newblock Axiomatic attribution for deep networks.
\newblock {\em arXiv:1703.01365}, 2017.

\bibitem{smilkov2017smoothgrad}
Daniel Smilkov, Nikhil Thorat, Been Kim, Fernanda Vi{\'e}gas, and Martin
  Wattenberg.
\newblock Smoothgrad: removing noise by adding noise.
\newblock {\em arXiv:1706.03825}, 2017.

\bibitem{shrikumar2017learning}
Avanti Shrikumar, Peyton Greenside, and Anshul Kundaje.
\newblock Learning important features through propagating activation
  differences.
\newblock In {\em Proceedings of International Conference on Machine Learning},
  pages 3145--3153, 2017.

\bibitem{hara2018maximally}
Satoshi Hara, Kouichi Ikeno, Tasuku Soma, and Takanori Maehara.
\newblock Maximally invariant data perturbation as explanation.
\newblock {\em arXiv:1806.07004}, 2018.

\bibitem{ding2017visualizing}
Yanzhuo Ding, Yang Liu, Huanbo Luan, and Maosong Sun.
\newblock Visualizing and understanding neural machine translation.
\newblock {\em Proceedings of the 55th Annual Meeting of the Association for
  Computational Linguistics}, pages 1150--1159, 2017.

\bibitem{chen2018learning}
Jianbo Chen, Le~Song, Martin Wainwright, and Michael Jordan.
\newblock Learning to explain: An information-theoretic perspective on model
  interpretation.
\newblock In {\em Proceedings of the 35th International Conference on Machine
  Learning}, pages 882--891, 2018.

\bibitem{fong2017interpretable}
Ruth~C Fong and Andrea Vedaldi.
\newblock Interpretable explanations of black boxes by meaningful perturbation.
\newblock In {\em Proceeding of the IEEE International Conference on Computer
  Vision}, pages 3449 -- 3457, 2017.

\bibitem{lundberg2017unified}
Scott~M Lundberg and Su-In Lee.
\newblock A unified approach to interpreting model predictions.
\newblock In {\em Proceedings of Advances in Neural Information Processing
  Systems}, pages 4765--4774, 2017.

\bibitem{tieleman2012lecture}
Tijmen Tieleman and Geoffrey Hinton.
\newblock Lecture 6.5-rmsprop: Divide the gradient by a running average of its
  recent magnitude.
\newblock {\em COURSERA: Neural networks for machine learning}, 4(2):26--31,
  2012.

\bibitem{kingma2014adam}
Diederik~P Kingma and Jimmy Ba.
\newblock Adam: A method for stochastic optimization.
\newblock {\em arXiv:1412.6980}, 2014.

\bibitem{simonyan2014very}
Karen Simonyan and Andrew Zisserman.
\newblock Very deep convolutional networks for large-scale image recognition.
\newblock {\em arXiv:1409.1556}, 2014.

\end{thebibliography}

\end{document}